\documentclass{article} 
\usepackage{iclr2018_conference,times}
\usepackage{url}

\usepackage[utf8]{inputenc}
\usepackage{graphicx}
\usepackage[colorinlistoftodos]{todonotes}
\usepackage{natbib}

\usepackage[utf8]{inputenc} 
\usepackage[T1]{fontenc}    
\usepackage{hyperref}       
\usepackage{url}            
\usepackage{booktabs}       
\usepackage{amsfonts}       
\usepackage{nicefrac}       
\usepackage{microtype}      

\usepackage{amsmath}
\usepackage{graphicx}
\usepackage[linesnumbered,ruled]{algorithm2e}
\usepackage{multirow}
\usepackage{subcaption}
\usepackage{color}
\usepackage{comment}
\usepackage{wrapfig}
\usepackage{mathrsfs}
\usepackage{epsfig}
\usepackage{graphics}
\usepackage{epsf}
\usepackage{amsmath, amsthm, amssymb, multirow, paralist}
\usepackage{color}
\newcommand{\rM}{\mathrm{M}}

\newtheorem{theorem}{Theorem}[section]

\newtheorem{thm}{Theorem}
\newtheorem{prop}{Proposition}

\title{Robust Task Clustering for Deep and Diverse\\ Multi-Task and Few-Shot Learning}


\author{Mo Yu\thanks{Equal contributions. Corresponding authors: \texttt{yum@us.ibm.com, xiaoxiao.guo@ibm.com, jinfengy@us.ibm.com}.} \quad   Xiaoxiao Guo$^*$\quad Jinfeng Yi$^*$\quad Shiyu Chang \\ 
\textbf{Saloni Potdar}\quad \textbf{Gerald Tesauro}\quad \textbf{Haoyu Wang}\quad \textbf{Bowen Zhou}\\
 \vspace{0.02cm}\\
AI Foundations -- Learning,  IBM Research\\
IBM T. J. Watson Research Center, Yorktown Heights, NY 10598\\
\texttt{yum@us.ibm.com, xiaoxiao.guo@ibm.com, jinfengy@us.ibm.com}
}

%

\def \B {\mathcal{B}}
\def \x {\mathbf{x}}
\def \H {\mathcal{H}}
\def \R {\mathbb{R}}

\def \sgn {\mbox{sgn}}
\def \a {\mathbf{a}}
\def \u {\mathbf{u}}

\def \E {\textbf{E}}

\def \A {\textbf{A}}

\def \y {\mathbf{y}}

\def \S {\textbf{S}}
\def \X {\textbf{X}}
\def \Q {\textbf{Q}}
\def \Z {\textbf{Z}}

\def \Y {\textbf{Y}}
\def \P {\mathcal{P}}
\def \u {\mathbf{u}}
\def \v {\mathbf{v}}
\def \e {\mathbf{e}}
\def \U {\textbf{U}}
\def \V {\textbf{V}}
\def \S {\textbf{S}}
\def \X {\textbf{X}}
\def \Q {\textbf{Q}}
\def \Z {\textbf{Z}}
\def \U {\textbf{U}}
\def \N {\textbf{N}}
\def \F {\textbf{F}}
\def \V {\textbf{V}}
\def \P {\textbf{P}}
\def \Y {\textbf{Y}}
\def \B {\textbf{B}}
\def \H {\textbf{H}}
\def \E {\textbf{E}}
\def \A {\textbf{A}}

\def \v {\mathbf{v}}
\def \e {\mathbf{e}}

\def \rank {\mbox{rank}}

\def \robusttc {\textsc{RobustTC}}

\iclrfinalcopy 

\begin{document}

\maketitle
\begin{abstract}

We investigate task clustering for deep learning-based multi-task and few-shot learning in the settings with large numbers of tasks. Our method measures task similarities using cross-task transfer performance matrix. 
Although this matrix provides us critical information regarding similarities between tasks, the uncertain task-pairs, i.e., the ones with extremely asymmetric transfer scores, may collectively mislead clustering algorithms to output an inaccurate task-partition.
Moreover, when the number of tasks is large, generating the full transfer performance matrix can be very time consuming. To overcome these limitations, we propose a novel task clustering algorithm to estimate the similarity matrix based on the theory of matrix completion. The proposed algorithm can work on partially-observed similarity matrices based on only sampled task-pairs with reliable scores, ensuring its efficiency and robustness. Our theoretical analysis shows that under mild assumptions, the reconstructed matrix perfectly matches the underlying “true” similarity matrix with an overwhelming probability. The final task partition is computed by applying an
efficient spectral clustering algorithm to the recovered matrix. Our results show that the new task clustering method can discover task clusters that benefit both multi-task learning and few-shot learning setups for sentiment classification and dialog intent classification tasks. 
\end{abstract}

\section{Introduction}

This paper leverages knowledge distilled from a large number of learning tasks~\citep{barzilai2015convex,van2017hybrid}, or \emph{MAny Task Learning (\textbf{MATL})}, to achieve the goal of (i) improving the overall performance of all tasks, as in \emph{multi-task learning (\textbf{MTL})}; and (ii) rapid-adaptation to a new task by using previously learned knowledge, similar to \emph{few-shot learning (\textbf{FSL})} and transfer learning.
Previous work on multi-task learning and transfer learning used small numbers of related tasks (usually $\sim$10) picked by human experts.  By contrast, MATL tackles hundreds or thousands of tasks~\citep{barzilai2015convex,van2017hybrid}, with unknown relatedness between pairs of tasks, introducing new challenges such as task diversity and model inefficiency.

MATL scenarios are increasingly common in a wide range of machine learning applications
with potentially huge impact.
Examples include reinforcement learning for game playing -- where many numbers of sub-goals are treated as tasks by the agents for joint-learning, e.g. \citet{van2017hybrid} achieved the state-of-the-art on the Ms. Pac-Man game by using a multi-task learning architecture to approximate rewards of over 1,000 sub-goals (reward functions).
Another important example is enterprise AI cloud services -- where many clients submit various tasks/datasets to train machine learning models for business-specific purposes. The clients could be companies who want to know opinion from their customers on products and services, agencies that monitor public reactions to policy changes, and financial analysts who analyze news as it can potentially influence the stock-market. Such MATL-based services thus need to handle the diverse nature of clients' tasks. 

\textbf{Challenges on Handling Diverse (Heterogeneous) Tasks} \ \ 
Previous multi-task learning and few-shot learning research usually work on homogeneous tasks, e.g. all tasks are binary classification problems, or tasks are close to each other (picked by human experts) so the positive transfer between tasks is guaranteed.
However, with a large number of tasks in a MATL setting, the above assumption may not hold, i.e. we need to be able to deal with tasks with larger diversity. 
Such diversity can be reflected as (i) \textbf{tasks with varying numbers of labels}: when tasks are diverse, different tasks could have different numbers of labels; and the labels might be defined in different label spaces without relatedness. Most of the existing multi-task and few-shot learning methods will fail in this setting; and more importantly (ii)
\textbf{tasks with positive and negative transfers}: 
since tasks are not guaranteed to be similar to each other in the MATL setting, they are not always able to help each other when trained together, i.e. \emph{negative transfer} \citep{yosinski2014transferable} between tasks.
For example, in dialog services, the sentences ``\emph{What fast food do you have nearby}'' and ``\emph{Could I find any Indian food}'' may belong to two different classes ``\emph{fast\_food}'' and ``\emph{indian\_food}'' for a restaurant recommendation service in a city; while for a travel-guide service for a park, those two sentences could belong to the same class ``\emph{food\_options}''.
In this case the two tasks may hurt each other when trained jointly with a single representation function, since the first task turns to give similar representations to both sentences while the second one turns to distinguish them in the representation space.

\textbf{A Task Clustering Based Solution} \ \ 
To deal with the second challenge above, we propose to partition the tasks to clusters, making the tasks in each cluster more likely to be related. Common knowledge is only shared across tasks within a cluster, thus the negative transfer problem is alleviated.
There are a few task clustering algorithm proposed mainly for convex models \citep{kumar2012learning,kang2011learning,crammer2012learning,barzilai2015convex}, but they assume that the tasks have the same number of labels (usually binary classification). 
In order to handle tasks with varying numbers of labels, we adopt a similarity-based task clustering algorithm. The task similarity is measured by cross-task transfer performance, which is a matrix $\S$ whose $(i,j)$-entry $\S_{ij}$ is the estimated accuracy by adapting the learned representations on the $i$-th (source) task to the $j$-th (target) task.
The above task similarity computation does not require the source task and target task to have the same set of labels, as a result, our clustering algorithm could naturally handle tasks with varying numbers of labels.

Although cross-task transfer performance can provide critical information of task similarities, directly using it for task clustering may suffer from both efficiency and accuracy issues. First and most importantly, evaluation of all entries in the matrix $\S$ involves conducting the source-target transfer learning $O(n^2)$ times, where $n$ is the number of tasks. For a large number of diverse tasks where the $n$ can be larger than 1,000, evaluation of the full matrix is unacceptable (over 1M entries to evaluate). Second, the estimated cross-task performance (i.e. some $\S_{ij}$ or $\S_{ji}$ scores) is often unreliable due to small data size or label noises. When the number of the uncertain values is large, they can collectively mislead the clustering algorithm to output an incorrect task-partition. 

To address the aforementioned challenges, we propose a novel task clustering algorithm based on the theory of matrix completion~\citep{candes2010power}. 
Specifically, we deal with the huge number of entries by randomly sample task pairs to evaluate the $\S_{ij}$ and $\S_{ji}$ scores; and deal with the unreliable entries by keeping only task pairs $(i,j)$ with consistent $\S_{ij}$ and $\S_{ji}$ scores.
Given a set of $n$ tasks, we first construct an $n\times n$ partially-observed matrix $\Y$, where its observed entries correspond to the sampled and reliable task pairs $(i,j)$ with consistent $\S_{ij}$ and $\S_{ji}$ scores.
Otherwise, if the task pairs $(i,j)$ are not sampled to compute the transfer scores or the scores are inconsistent, we mark both $\Y_{ij}$ and $\Y_{ji}$ as unobserved.
Given the constructed partially-observed matrix $\Y$, our next step is to recover an $n\times n$ full similarity matrix  using a robust matrix completion approach, and then generate the final task partition by applying spectral clustering to the completed similarity matrix. 
The proposed approach has a 2-fold advantage. First, our method carries a strong theoretical guarantee, showing that the full similarity matrix can be {\it perfectly} recovered if the number of observed correct entries in the partially observed similarity matrix is at least $O(n\log^2n)$. This theoretical result allows us to only compute the similarities of $O(n\log^2n)$ instead of $O(n^2)$ pairs, thus greatly reduces the computation when the number of tasks is large. 
Second, by filtering out uncertain task pairs, the proposed algorithm will be less sensitive to noise, leading to a more robust clustering performance.

The task clusters allow us to handle (i) diverse MTL problems, by model sharing only within clusters such that the negative transfer from irrelevant tasks can be alleviated; and 
(ii) diverse FSL problems, where a new task can be assigned a task-specific metric, which is a linear combination of the metrics defined by different clusters, such that the diverse few-shot tasks could derive different metrics from the previous learning experience.
Our results show that the proposed task clustering algorithm, combined with the above MTL and FSL strategies, could give us 
significantly better deep MTL and FSL algorithms on sentiment classification and intent classification tasks.

\vspace{-0.1 in}
\section{Related Work}
\label{sec:related}
\vspace{-0.1 in}

{\noindent \bf Task/Dataset Clustering on Model Parameters} \quad
This class of task clustering methods measure the task relationships in terms of model parameter similarities on individual tasks. Given the parameters of convex models, task clusters and cluster assignments could be derived via matrix decomposition~\citep{kumar2012learning} or k-means based approach~\citep{kang2011learning}. 
The parameter similarity based task clustering method for deep neural networks~\citep{yang2016deep} applied low-rank tensor decomposition of the model layers from multiple tasks. This method is infeasible for our MATL setting because of its high computation complexity with respect to the number of tasks and its inherent requirement on closely related tasks because of its parameter-similarity based approach.

{\noindent \bf Task/Dataset Clustering with Clustering-Specific Training Objectives} \quad
Another class of task clustering methods joint assign task clusters and train model parameters for each cluster that minimize training loss within each cluster by K-means based approach~\citep{crammer2012learning} or minimize overall training loss combined with sparse or low-ranker regularizers with convex optimization~\citep{barzilai2015convex,murugesan2017co}. Deep neural networks have flexible representation power and they may overfit to arbitrary cluster assignment if we consider training loss alone.
Also, these methods require identical class label sets across different tasks, which does not hold in most of the real-world MATL settings. 

{\noindent \bf Few Shot Learning} \quad
FSL~\citep{li2006one,miller2000learning} aims to learn classifiers for new classes with only a few training examples per class. Bayesian Program Induction~\citep{lake2015human} represents concepts as simple programs that best explain observed examples under a Bayesian criterion.  Siamese neural networks rank similarity between inputs~\citep{koch2015siamese}. Matching Networks~\citep{vinyals2016matching} maps a small labeled support set and an unlabeled example to its
label, obviating the need for fine-tuning to adapt to new class types. These approaches essentially learn one metric for all tasks, which is sub-optimal when the tasks are diverse. An LSTM-based meta-learner~\citep{ravi2017optimization} learns the exact optimization algorithm used to train
another learner neural-network classifier for the few-shot setting. However, it requires uniform classes across tasks. Our FSL approach can handle the challenges of diversity and varying sets of class labels.

\begin{figure}[ht]
\centering
\includegraphics[scale=0.247]{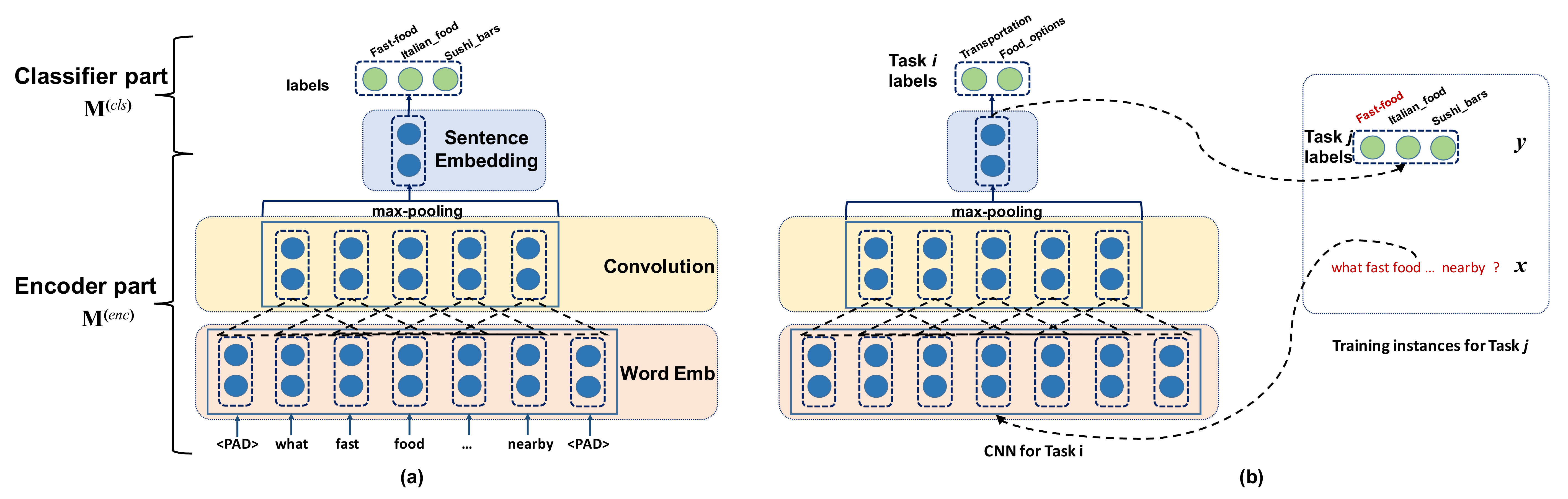}
\caption{{The convolutional neural networks used in this work: (a) a single-task CNN. The encoder component takes the sentence as input and outputs a fixed-length sentence embedding vector; the classifier component predicts class labels with the sentence embedding. (b) the evaluation on transfer performance from task $i$ to $j$, where the encoder of task $i$ was taken to encode task $j$'s sentences and then predict task $j$'s labels (dashed arcs).}
}
\label{fig:basic_models}
\end{figure}

\section{Methodology}
\label{sec:methods}
\label{sec:notations}
Let $ \mathcal{T} = \left \{ \mathrm{T}_1, \mathrm{T}_2, \cdots, \mathrm{T}_n  \right \}$ be the set of $n$ tasks to be clustered, and each task $\mathrm{T}_i$ consists of a train/validation/test data split $\left \{ D^{train}_{i}, D^{valid}_{i}, D^{test}_{i} \right\}$. 
We consider text classification tasks, comprising labeled examples $\{x, y\}$, where the input $x$ is a sentence or document (a sequence of words) and $y$ is the label.
We first train each classification model $\mathrm{M}_i$ on its training set $D^{train}_{i}$, which yields a set of models $\mathcal{M} = \left \{ \mathrm{M}_1,\mathrm{M}_2, \cdots, \mathrm{M}_n \right \}$.
We use convolutional neural network (CNN), which has reported results near state-of-the-art on text classification \citep{kim:2014:EMNLP2014,johnson2016supervised}. 
CNNs also train faster than recurrent neural networks~\citep{hochreiter1997long}, making large-$n$ MATL scenarios more feasible.
Figure \ref{fig:basic_models} shows the CNN architecture. Following \citep{collobert2011natural,kim:2014:EMNLP2014}, the model consists of a convolution layer and a max-pooling operation over the entire sentence. The model has two parts: an \emph{encoder} part and a \emph{classifier} part. Hence each model $\mathrm{M}_i=\{ \mathrm{M}^{enc}_i, \mathrm{M}^{cls}_i \}$.
The above broad definitions encompasses other classification tasks (e.g. image classification) and other classification models (e.g. LSTMs~\citep{hochreiter1997long}).

We propose a task-clustering framework for both multi-task learning (MTL) and few-shot learning (FSL) settings. In this framework, we have the MTL and FSL algorithms summarized in Section \ref{ssec:method_meta_model} \& \ref{ssec:method_fsl}, where our task-clustering framework serves as the initial step in both algorithms.
Figure \ref{fig:basic_idea} gives an overview of our idea and an example on how our task-clustering algorithm helps MTL.

\begin{figure}[ht]
\centering
\includegraphics[scale=0.48]{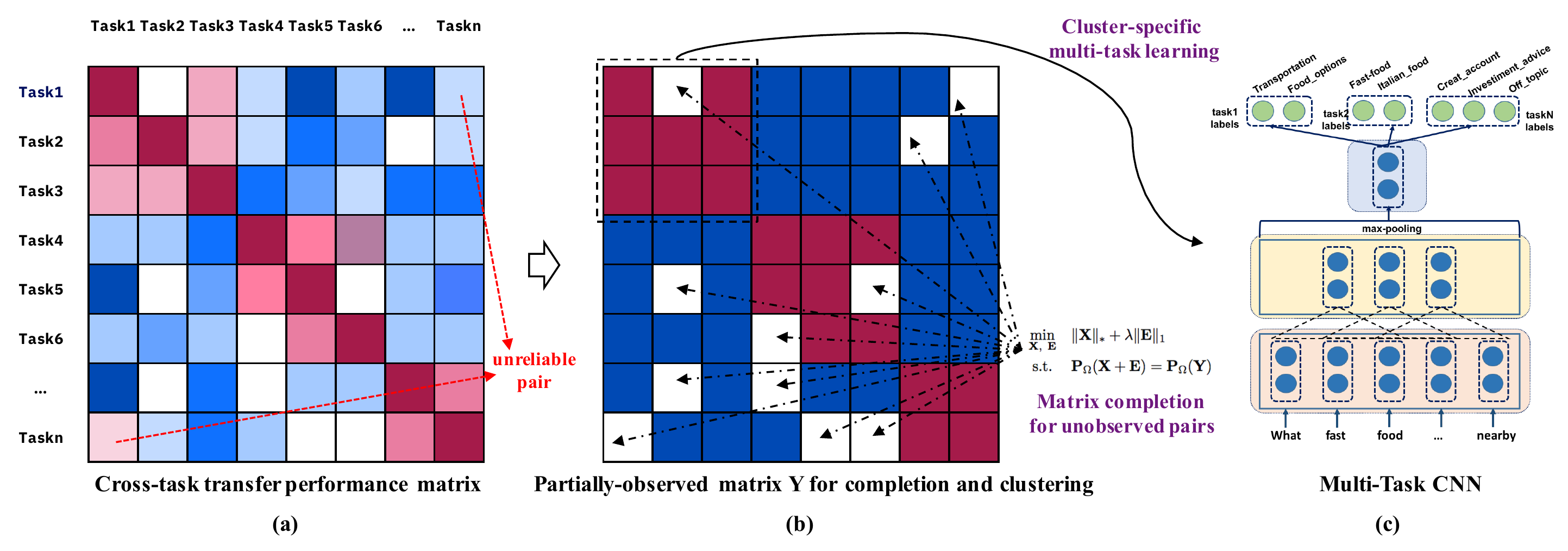}
\caption{{Overview of the idea of our Robust Clustering method with multi-task learning as an example application. (a) an illustration of the sparse cross-tasks transfer-performance matrix with unobserved entries (white blocks) and unreliable values (top-right and bottom-left corners), where red colors indicate positive transfer and blue colors indicate negative transfer; 
(b) the constructed binary partially-observed matrix with low-rank constraint for matrix completion and clustering (see Section \ref{ssec:method_completion} for the detailed mathematics); (c) a multi-task Convolutional Neural Network (MTL-CNN) architecture trained on a task cluster (tasks 1, 2 and 3 in the example).}
}
\label{fig:basic_idea}
\end{figure}

\subsection{Cross-Task Transfer-Performance Matrix Estimation}
\label{ssec:encoder_transfer}
Using single-task models, we can compute performance scores $s_{ij}$ by adapting each $\mathrm{M}_i$ to each task $T_j (j\neq i)$. This forms an 
$n \times n$ pair-wise classification performance matrix $\textbf{S}$,
called the \emph{transfer-performance matrix}.
Note that $\textbf{S}$ is asymmetric since usually $\S_{ij} \neq \S_{ji}$.

When \textbf{all tasks have identical label sets}, we can directly evaluate the model $\mathrm{M}_i$ on the training set of task $j$, $D^{train}_{j}$, and use the accuracy as the cross-task transfer score $\S_{ij}$.

When \textbf{tasks have different label sets}, 
we freeze the encoder $\mathrm{M}^{enc}_i$ from $\rM_i$, on top of which we use $D^{train}_j$ to train a classifier layer. 
This gives us a new task $j$ model, and we test this model on $D^{valid}_j$ to get the accuracy as the transfer-performance $\S_{ij}$.
The score shows how the representations learned on task $i$ can be adapted to task $j$, thus indicating the similarity between tasks.

\textbf{Task Pair Sampling:}\quad 
When the number of tasks $n$ is very large, the evaluation of $O(n^2)$ entries is time-consuming. Thus we sample $n'$ pairs of tasks $\{i,j\}$~$(i\neq j)$, with $n' \ll n$. Then we set $\S_{ij}$ and $\S_{ji}$ as the transfer performance defined above when $\{i,j\}$ is in the $n'$ samples, otherwise the entry is marked as ``\emph{unobserved}''~\footnote{We set all $\S_{ii}=1$.}.

\setlength{\textfloatsep}{0pt}
\begin{algorithm}[htbp]
\small
{
\SetKwInOut{Input}{Input}
\SetKwInOut{Output}{Output}
 \Input{A set of $n$ tasks $ \mathcal{T} = \left \{ \mathrm{T}_1, \mathrm{T}_2, \cdots, \mathrm{T}_n  \right \}$, number of task clusters $K$}
 \Output{$K$ task clusters $C_{1:K}$}
 \DontPrintSemicolon
\BlankLine
 \textbf{Learning of Single-Task Models}: train single-task models $\mathrm{M}_i$ for each task $\mathrm{T}_i$\;
 \textbf{Evaluation of Transfer-Performance Matrix}: get performance matrix $\mathbf{\S}$ (Section \ref{ssec:encoder_transfer})\;
 \textbf{Score Filtering}: Filter the uncertain scores in $\S$ and construct the symmetric matrix $\Y$ using Eq. (\ref{eqn:A})\;
 \textbf{Matrix Completion}: Complete the similar matrix $\X$ from $\Y$ using Eq. (\ref{eqn:pro}) \;
 \textbf{Task Clustering}: $C_{1:K}=SpectralClustering(\X, K)$\;
 \caption{\label{algo:task-clustering}{\robusttc: Robust Task Clustering based on Matrix Completion}}}
\end{algorithm}

\subsection{Robust Task Clustering by Matrix Completion}
\label{ssec:method_completion}
As discussed in the introduction, directly generating the full matrix $\S$ and partitioning tasks based on it has the following disadvantages: (i) there are too many entries to evaluate when the number of tasks is large; (ii) some task pairs are uncertain, thus can mislead the clustering algorithm to output an incorrect task-partition; and (iii) $\S$ is asymmetric, thus cannot be directly analyzed by many conventional clustering methods. We address the first issue by randomly sample some task pairs to evaluate, as described in Section~\ref{ssec:encoder_transfer}. Besides, we address the other issues by constructing a symmetric similarity matrix and only consider the reliable task relationships, as will be introduced in Eq. (\ref{eqn:A}). 
Below, we describe our method (summarized in Algorithm \ref{algo:task-clustering}) in detail.

First, we use only reliable task pairs to generate a {\it partially-observed} similarity matrix $\Y$. Specifically, if $\S_{ij}$ and $\S_{ji}$ are high enough,  
then it is likely that tasks $\{i,j\}$ belong to a same cluster and share significant information. Conversely, if
$\S_{ij}$ and $\S_{ji}$ are low enough, 
then they tend to belong to different clusters. To this end, we need to design a mechanism to determine if a performance is high or low enough. Since different tasks may vary in difficulty, a fixed threshold is not suitable.
Hence, we define a dynamic threshold using the mean and standard deviation of the target task performance, i.e., $\mu_j = \text{mean}(\S_{:j})$ and $\sigma_j=\text{std}(\S_{:j})$, where $\S_{:j}$ is the $j$-th column of
$\S$. We then introduce two positive parameters $p_1$ and $p_2$, and define high and low performance as $\S_{ij}$ greater than $\mu_j + p_1 \sigma_j$ or lower than $\mu_j - p_2 \sigma_j$, respectively. When both $\S_{ij}$ and $\S_{ji}$ are high and low enough, we set their pairwise similarity as $1$ and $0$, respectively. Other task pairs are treated as uncertain task pairs and are marked as unobserved, and will have no influence to our clustering method.
This leads to a partially-observed symmetric matrix $\Y$, i.e., 
\begin{eqnarray}
\Y_{ij}=\Y_{ji}=\left\{
\begin{array}{ll}
1     & \text{if}\ \ \S_{ij} > \mu_j + p_1 \sigma_j\ \ \text{and}\ \ \S_{ji} > \mu_i + p_1 \sigma_i\\
0     & \text{if}\ \ \S_{ij} < \mu_j - p_2 \sigma_j\ \ \text{and}\ \ \S_{ji} < \mu_i - p_2 \sigma_i\\
\mathrm{unobserved} & \mathrm{otherwise}
\end{array}
\right. \label{eqn:A}
\end{eqnarray}

Given the partially observed matrix $\Y$, we then reconstruct the full similarity matrix $\X \in \mathbb{R}^{n\times n}$.
We first note that the similarity matrix $\X$ should be of low-rank (proof deferred to appendix). 
Additionally, since the observed entries of $\Y$ are generated based on high and low enough performance, it is safe to assume that most observed entries are correct and only a few may be incorrect. Therefore, we introduce a sparse matrix $\E$ to capture the observed incorrect entries in $\Y$. Combining the two observations, $\Y$ can be decomposed into the sum of two matrices $\X$ and $\E$, where $\X$ is a low rank matrix storing similarities between task pairs, and $\E$ is a sparse matrix that captures the errors in $\Y$.  The matrix completion problem can be cast as the following convex optimization problem:
\begin{eqnarray}\label{eqn:pro}
&\min\limits_{\X,\ \E} & \|\X\|_* + \lambda \|\E\|_1\\ \label{eqn:B}
& \mbox{s.t.}& \P_{\Omega}(\X+\E)  =  \P_{\Omega}(\Y), \nonumber
\end{eqnarray}
where $\|\circ\|_*$ denotes the matrix nuclear norm, the convex surrogate of rank function. $\Omega$ is the set of observed entries in $\Y$, and $\P_{\Omega}:\R^{n\times n} \mapsto \R^{n\times n}$ is a matrix projection operator defined as
\begin{eqnarray}
[\P_{\Omega}(\A)]_{ij} = \left\{
\begin{array}{ll}
\A_{ij} & \text{if}\ (i,j) \in \Omega \nonumber\\
0 & \mbox{otherwise}\nonumber
\end{array}
\right. \label{eqn:p}
\end{eqnarray}
The following theorem shows the perfect recovery guarantee for the problem (\ref{eqn:pro}). The proof is deferred to Appendix.

\begin{theorem}\label{thm:perfect-recovery}
Let $\X^* \in \R^{n\times n}$ be a rank $k$ matrix with a singular value decomposition  $\X^* = \U\Sigma \V^{\top}$, where $\U = (\u_1, \ldots, \u_k) \in \R^{n\times k}$ and $\V = (\v_1, \ldots, \v_k) \in \R^{n\times k}$ are the left and right singular vectors of $\X^*$, respectively. Similar to many related works of matrix completion, we assume that the following two assumptions are satisfied:
\begin{enumerate}
\item The row and column spaces of $\X$ have coherence bounded above by a positive number $\mu_0$.
\item Max absolute value in matrix\ $\U\V^{\top}$ is bounded above by $\mu_1\sqrt{r}/n$ for a positive number $\mu_1$.
\end{enumerate}
Suppose that $m_1$ entries of $\X^*$ are observed with their locations sampled uniformly at random, and among the $m_1$ observed entries, $m_2$ randomly sampled entries are corrupted. Using the resulting partially observed matrix as the input to the problem (\ref{eqn:pro}), then with a probability at least $1 - n^{-3}$, the underlying matrix $\X^*$ can be perfectly recovered, given 
\begin{enumerate}
\item $\mu(\E)\xi(\X) \leq \frac{1}{4k + 5}$,
\item $\frac{\xi(\X) - (2k -1)\mu(\E)\xi^2(\X)}{1 - 2(k+1)\mu(\E)\xi(\X)} < \lambda < \frac{1 - (4k+5)\mu(\E)\xi(\X)}{(k+2)\mu(\E)}$,
\item $\ m_1 - m_2 \geq C[\max(\mu_0, \mu_1)]^4n\log^2 n$,
\end{enumerate}
where $C$ is a positive constant; $\xi(\circ)$ and $\mu(\circ)$ denotes the low-rank and sparsity incoherence~\citep{chandrasekaran2011rank}.
\end{theorem}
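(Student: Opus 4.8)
The plan is to show that the ground-truth pair $(\X^*,\E^*)$ is the \emph{unique} optimizer of (\ref{eqn:pro}), where $\E^*$ is the (unknown) sparse error matrix whose support $\Gamma\subseteq\Omega$ is the random set of $m_2$ corrupted locations, so that the number of clean observations is $|\Omega\setminus\Gamma|=m_1-m_2$. The argument fuses the rank--sparsity decomposition analysis of \citet{chandrasekaran2011rank} with the golfing-scheme matrix-completion analysis of \citet{candes2010power}: construct an (approximate) dual certificate, then convert any feasible perturbation of $(\X^*,\E^*)$ into a strict increase of the objective.

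First I would record the first-order optimality conditions of the constrained program. Let $T$ be the tangent space at $\X^*$ to the rank-$k$ variety (spanned by matrices $\U R^{\top}$ and $S\V^{\top}$). Since the multiplier $\Q$ attached to the constraint $\P_\Omega(\X+\E)=\P_\Omega(\Y)$ enters only through $\P_\Omega\Q$, we may take $\Q$ supported on $\Omega$; stationarity then requires $\Q$ to be simultaneously a (strict) subgradient of $\|\cdot\|_*$ at $\X^*$ and of $\lambda\|\cdot\|_1$ at $\E^*$. Concretely it suffices to produce $\Q$, supported on $\Omega$, with (i) $\P_T(\Q)\approx\U\V^{\top}$ and $\|\P_{T^\perp}(\Q)\|<1$ in spectral norm, and (ii) $\P_\Gamma(\Q)\approx\lambda\,\sgn(\E^*)$ and $\|\P_{\Omega\setminus\Gamma}(\Q)\|_\infty<\lambda$, together with the injectivity condition that $T$ meets $\{M:\P_\Omega(M)=\P_\Gamma(M)\}$ only at $0$. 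Standard convex-duality bookkeeping then yields exact recovery.

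Second, I would build $\Q=\Q^{(1)}+\Q^{(2)}$ from two pieces computed on disjoint index sets. On the clean locations $\Omega\setminus\Gamma$, whose cardinality is $\ge C[\max(\mu_0,\mu_1)]^4 n\log^2 n$ by Condition 3, run the golfing scheme: split $\Omega\setminus\Gamma$ into $O(\log n)$ i.i.d.\ batches and iterate
\[ \Q_0=0, \qquad \Q_j=\Q_{j-1}+c_j\,\P_{\Omega_j}\!\left(\U\V^{\top}-\P_T\Q_{j-1}\right), \]
with appropriate rescalings $c_j$; Bernstein-type concentration controlled through the coherence parameters $\mu_0,\mu_1$ then gives $\Q^{(1)}$ with $\|\P_T\Q^{(1)}-\U\V^{\top}\|_F$ exponentially small, $\|\P_{T^\perp}\Q^{(1)}\|$ small, and $\|\Q^{(1)}\|_\infty$ small. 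On the corrupted locations $\Gamma$, add a correction $\Q^{(2)}$ supported on $\Gamma$ that solves the residual equation $\P_\Gamma\P_T(\Q^{(1)}+\Q^{(2)})=\lambda\,\sgn(\E^*)$; its solvability and the needed bounds on $\|\Q^{(2)}\|$ and $\|\P_T\Q^{(2)}\|_F$ follow from a Neumann series whose contraction factor is governed by $\mu(\E)\,\xi(\X)$ --- this is exactly where Condition 1, $\mu(\E)\xi(\X)\le 1/(4k+5)$, and the explicit factors of $k$ enter. Verifying (i)--(ii) for $\Q$: the upper bound on $\lambda$ in Condition 2 keeps $\|\P_{\Omega\setminus\Gamma}(\Q)\|_\infty<\lambda$, while the lower bound keeps the spectral mass of $\Q^{(2)}$ small enough that $\|\P_{T^\perp}(\Q)\|<1$; tracing the constants (in $k,\mu(\E),\xi(\X)$) through these two inequalities reproduces precisely the stated admissible interval for $\lambda$. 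Finally, a union bound over the $O(\log n)$ golfing events and the injectivity event (the latter from a concentration bound on $\P_T\P_{\Omega\setminus\Gamma}\P_T$, again using Condition 3), each of probability $\le n^{-4}$ for $C$ large enough, gives overall failure probability $\le n^{-3}$.

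The main obstacle is coupling the two constructions into a \emph{single} certificate: neither the matrix-completion part nor the robust-PCA part is new in isolation, but one must simultaneously control that the golfing iterates do not inflate the $\ell_\infty$ norm on $\Gamma$ and that the sparse correction $\Q^{(2)}$ does not inflate the spectral norm off $T$. These competing requirements are what force the joint hypothesis $\mu(\E)\xi(\X)\le\frac{1}{4k+5}$ and the narrow window for $\lambda$, and they dictate running the two probabilistic constructions on the disjoint sets $\Omega\setminus\Gamma$ and $\Gamma$ and then reconciling them. Making this bookkeeping of constants tight --- rather than merely asserting the existence of some certificate --- is the bulk of the work.
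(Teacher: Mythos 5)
Your overall strategy---exhibit a dual certificate $\Q$ supported on $\Omega$ that is simultaneously a subgradient of the nuclear norm at $\X^*$ and of $\lambda\|\cdot\|_1$ at $\E^*$, plus an injectivity argument to get uniqueness---is the same skeleton the paper uses, but your construction of $\Q$ is genuinely different. The paper works with an \emph{exact} certificate: it first proves a uniqueness lemma (Theorem B.1) requiring $\P_T(\Q)=\U\V^{\top}$ and $\P_{\Delta}(\Q)=\lambda\,\sgn(\E)$ to hold with equality (your $\Gamma$ is the paper's $\Delta$), with uniqueness obtained by contradiction from a Proposition showing $\P_T\P_{\Omega}\P_T$ is injective on $T$ once $m_1-m_2>C_R^2\mu_0 rn\beta\log n$ (Theorem 3.2 of \citet{candes2010power}). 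It then builds $\Q$ in the style of Theorem 2 of \citet{chandrasekaran2011rank}: $\Q=\lambda\,\sgn(\E)+\epsilon_{\Delta}+\P_{\Omega}\P_T(\P_T\P_{\Omega}\P_T)^{-1}(\U\V^{\top}+\epsilon_T)$, where the corrections $(\epsilon_T,\epsilon_{\Delta})$ solve a coupled linear system whose solvability again reduces to invertibility of $\P_T\P_{\Omega\setminus\Delta}\P_T$ on $T$. All remaining bounds are \emph{deterministic} in the incoherence parameters $\mu(\E)$ and $\xi(\X)$; randomness enters only through the two invertibility events and Corollary 3.5 of \citet{candes2010power}. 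Your golfing-scheme construction on $\Omega\setminus\Delta$ plus a Neumann-series correction on $\Delta$ is the Cand\`es--Li--Ma--Wright style \emph{approximate} certificate; it is a legitimate alternative and typically wins on sample complexity (it would let you replace $\mu_0^4$-type dependence by near-optimal factors), at the price of a more delicate uniqueness argument that must tolerate $\P_T(\Q)\neq\U\V^{\top}$.

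One caveat: your claim that tracing constants through the golfing construction ``reproduces precisely the stated admissible interval for $\lambda$'' is over-optimistic. The specific constants $(4k+5)$, $(k+2)$, $(2k-1)$, $2(k+1)$ in conditions 1--2 fall out of the paper's coupled deterministic bounds
\[
\|\epsilon_T\| \;\leq\; \frac{\lambda\mu(\E)+2\xi(\X)\mu(\E)}{1-(2k+1)\xi(\X)\mu(\E)},
\qquad
\|\epsilon_{\Delta}\|_{\infty} \;\leq\; \frac{2\xi(\X)+(2k+1)\lambda\xi(\X)\mu(\E)}{1-(2k+1)\xi(\X)\mu(\E)},
\]
which in turn use $\rank(\epsilon_T)\leq 2k$ for $\epsilon_T\in T$. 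A golfing certificate has exponentially decaying residuals and its own numerology, so it would prove a theorem of the same flavor but with a \emph{different} admissible window for $\lambda$ and a different sparsity condition; to establish the statement with these exact constants you would still need to carry out the exact-certificate bookkeeping that the paper performs.
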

Theorem~\ref{thm:perfect-recovery} implies that even if some of the observed entries computed by (\ref{eqn:A}) are incorrect, problem (\ref{eqn:pro}) can still perfectly recover the underlying similarity matrix $\X^*$ if the number of observed correct entries is at least $O(n \log^2 n)$. 
For MATL with large $n$, this implies that only a tiny fraction of all task pairs is needed to reliably infer similarities over all task pairs.
Moreover, the completed similarity matrix $\X$ is symmetric, due to symmetry of the  input matrix $\Y$.
This enables analysis by similarity-based clustering algorithms, such as spectral clustering.

\subsection{Multi-Task Learning Based on Tasks Clusters}
\label{ssec:method_meta_model}

\begin{algorithm}[h!]
\small
{
\SetKwInOut{Input}{Input}
\SetKwInOut{Output}{Output}
 \Input{A set of $n$ tasks $ \mathcal{T} = \left \{ \mathrm{T}_1, \mathrm{T}_2, \cdots, \mathrm{T}_n  \right \}$; number of clusters $K$}
 \Output{$K$ task clusters $C_{1:K}$ and cluster-models $\mathbf{\Lambda} = \left \{ \Lambda_1,\Lambda_2, \cdots, \Lambda_K \right \}$}
 \DontPrintSemicolon
\BlankLine
 \textbf{Robust Task Clustering}: $C_{1:K}$ = \robusttc($\mathcal{T}$,$K$) (Algorithm \ref{algo:task-clustering}) \;
 \textbf{Cluster-Model Training}: Train one multi-task model $\mathrm{\Lambda}_i$ on each task cluster $C_i$ (Section \ref{ssec:method_meta_model})\;
 \caption{\label{algo:taskcluster-mtl}{\robusttc-MTL: Multi-Task Learning based on Task Clustering}}
}
\end{algorithm}

For each cluster $C_k$, we train a model $\Lambda_k$ with all tasks in that cluster to encourage parameter sharing. We call $\Lambda_k$ the \textbf{cluster-model}.
When evaluated on the MTL setting,
with sufficient data to train a task-specific classifier, we only share the encoder part and have distinct task-specific classifiers (Figure~\ref{fig:basic_models}(b)). These task-specific classifiers provide flexibility to handle varying number of labels. 

\subsection{Few-Shot Learning Based on Tasks Clusters}
\label{ssec:method_fsl}

\begin{algorithm}[h]
\small
{
\SetKwInOut{Input}{Input}
\SetKwInOut{Output}{Output}

 \Input{$N$ training tasks $ \mathcal{T} = \left \{ \mathrm{T}_1, \mathrm{T}_2, \cdots, \mathrm{T}_n  \right \}$; number of clusters $K$; target few-shot learning task $\mathrm{T}_{trg}$}
 \Output{A classification model for the target task $\mathrm{M}_{trg}$, $K$ task clusters $C_{1:K}$ and cluster-models $\mathbf{\Lambda} = \left \{ \Lambda_1,\Lambda_2, \cdots, \Lambda_K \right \}$}
 \DontPrintSemicolon
\BlankLine
 \textbf{Learning Cluster-Models on Training Tasks}: $C_{1:K}$, $\mathbf{\Lambda}$ = \robusttc-MTL($\mathcal{T}$,$K$) (Algorithm \ref{algo:taskcluster-mtl}) \;
 \textbf{Few-Shot Learning on Cluster-models}: Train a model $\mathrm{M}_{trg}$ on task $\mathrm{T}_{trg}$ with the method in Section \ref{ssec:method_fsl}.
 \caption{\label{algo:taskcluster-fsl}{\robusttc-FSL: Task Clustering for Few-Shot Learning}}}
\end{algorithm}

We only have access to a limited number of training samples in few-shot learning setting, so it is impractical to train well-performing task-specific classifiers as in the multi-task learning setting. Instead, we make the prediction of a new task by linearly combining prediction from learned clusters.
\begin{align}
p(y|x) = \sum_k \alpha_k P(y|x; \Lambda_k).
\label{eqn:fsl}
\end{align}
where $\Lambda_k$ is the learned (and frozen) model of the $k$-th cluster, $\{\alpha_{k}\}_{k=1}^{K}$ are adaptable parameters. 

We use some alternatives to train cluster-models $\Lambda_k$, which could better suit (and is more consistent to) the above FSL method.\footnote{We also tried these alternatives under the MTL settings, which perform worse than MTL-CNN.}
When \textbf{all tasks have identical label sets},
we train a single classification model on all the tasks like in previous work \citep{barzilai2015convex}, the predictor $P(y|x; \Lambda_k)$ is directly derived from this cluster-model. 
When \textbf{tasks have different label sets },
we train a metric-learning model like \citep{vinyals2016matching} among all the tasks in $C_k$, which consist a shared encoding function $\Lambda^{enc}_k$ aiming to make each example closer to examples with the same label compared to the ones with different labels.

Then we use the encoding function to derive the predictor by 
\begin{align}
P(y=y_l|x;\Lambda_k) = \frac{\exp\left \{ \Lambda^{enc}_k (x_l)^{\top}\Lambda^{enc}_k (x) \right \} }{\sum_{l'} \exp \left \{ \Lambda^{enc}_k (x_{l'})^{\top}\Lambda^{enc}_k (x) \right \} }
\end{align}
where $x_{l}$ is the corresponding training sample for label $y_{l}$. 

\section{Experiments}

\subsection{Experiment Setup}
\label{ssec:exp_setup}

{\noindent \bf Data Sets}\ \ \ 
We test our methods by conducting experiments on three text classification data sets. In the data-preprocessing step we used NLTK toolkit\footnote{\url{http://www.nltk.org/}} for tokenization. For MTL setting, all tasks are used for clustering and model training. For FSL setting, the task are divided into training tasks and testing tasks (target tasks), where the training tasks are used for clustering and model training, the testing tasks are few-shot learning ones used to for evaluating the method in Eq. (\ref{eqn:fsl}).

\begin{enumerate}
\item {\noindent \bf Amazon Review Sentiment Classification}\ \ \ 
First, following \citet{barzilai2015convex}, we construct a multi-task learning setting with the multi-domain sentiment classification~\citep{blitzer2007biographies} data set. 
The dataset consists of Amazon product reviews for 23 types of products (see Appendix 3 for the details).
For each domain, we construct three binary classification tasks with different thresholds on the ratings: the tasks consider a review as positive if it belongs to one of the following buckets =5 stars, >=4 stars or >=2 stars \footnote{Data downloaded from \url{http://www.cs.jhu.edu/~mdredze/datasets/sentiment/}, in which the 3-star samples were unavailable due to their ambiguous nature \citep{blitzer2007biographies}.}
These review-buckets then form the basis of the task-setup for MATL, giving us $23 \times 3=69$ tasks in total. For each domain we distribute the reviews uniformly to the three tasks.
For evaluation, we select tasks from 4 domains ({\it Books, DVD, Electronics, Kitchen}) as the target tasks (12 tasks) out of all 23 domains. For FSL evaluation, we create five-shot learning tasks on the selected target tasks. The cluster-models for this evaluation are standard CNNs shown in Figure \ref{fig:basic_models}(a), and we share the same output layer to evaluate the probability in Eq. (\ref{eqn:fsl}) as all tasks have the same number of labels.
\item {\noindent \bf Diverse Real-World Tasks: User Intent Classification for Dialog System}\ \ \ 
The second dataset is from an on-line service which trains and serves intent classification models to various clients. The dataset comprises recorded conversations between human users and dialog systems in various domains, ranging from personal assistant to complex service-ordering or a customer-service request scenarios. During classification, intent-labels\footnote{In conversational dialog systems, intent-labels are used to guide the dialog-flow.} are assigned to user utterances (usually sentences). We use a total of 175 tasks from different clients, and randomly sample 10 tasks from them as our target tasks. For each task, we randomly sample 64\% data into a training set, 16\% into a validation set, and use the rest as the test set (see Appendix 3 for details).
The number of labels for these tasks vary from 2 to 100. Hence, to adapt this to a FSL scenario, we keep one example for each label (one-shot), plus 20 randomly picked labeled examples to create our training data. We believe this is a fairly realistic estimate of labeled examples one client could provide easily. Since we deal with various number of labels in the FSL setting, we chose matching networks \citep{vinyals2016matching} as the cluster-models. 
\item {\noindent \bf Extra-Large Number of Real-World Tasks}\ \ \ 
Similar to the second dataset, we further collect 1,491 intent classification tasks from the on-line service.
This setting is mainly used to verify the robustness of our task clustering method, since it is difficult to estimate the full transfer-performance matrix $\S$ in this setting (1,491$^2$=2.2M entries).
Therefore, in order to extract task clusters, we randomly sample task pairs from the data set to obtain 100,000 entries in $\S$, which means that only about $100\textrm{K}/2.2\textrm{M} \approx 4.5\%$ of the entries in $\S$ are observed. The number of 100,000 is chosen to be close to $n\log^2 n$ in our theoretical bound in Theorem \ref{thm:perfect-recovery}, so that we could also verify the tightness of the bound empirically.
To make the best use of the sampled pairs, in this setting we modified the Eq. \ref{eqn:A}, so that each entry $\Y_{ij}=\Y_{ji}=1$ if $\S_{ij} \geq \mu_j$ or $\S_{ji} \geq \mu_i$ and $\Y_{ij}=0$ otherwise. In this way we could have determined number of entries in $\Y$ as well, since all the sampled pairs will correspond to observed (but noisy) entries in $\Y$. We only run MTL setting on this data set.
\end{enumerate}
{\noindent \bf Baselines}\ \ 
\ For MTL setting, we compare our method to the following baselines: (1) \textbf{single-task CNN}: training a CNN model for each task individually; (2) \textbf{holistic MTL-CNN}: training one MTL-CNN model (Figure \ref{fig:basic_models}b) on all tasks; (3) \textbf{holistic MTL-CNN (target only)}: training one MTL-CNN model on all the target tasks.
For FSL setting, the baselines consist of: (1) \textbf{single-task CNN}: training a CNN model for each task individually; (2) \textbf{single-task FastText}: training one FastText model~\citep{joulin2016bag} with fixed embeddings for each individual task; 
(3) \textbf{Fine-tuned the holistic MTL-CNN}: fine-tuning the classifier layer on each target task after training initial MTL-CNN model on all training tasks; (4) \textbf{Matching Network}: a metric-learning based few-shot learning model trained on all training tasks. We initialize all models with pre-trained 100-dim Glove embeddings (trained on 6B corpus)~\citep{pennington2014glove}.

As the intent classification tasks usually have various numbers of labels, to our best knowledge the proposed method is the only one supporting task clustering in this setting; hence we only compare with the above baselines.
Since sentiment classification involves binary labels, we compare our method with the 
state-of-the-art logistic regression based task clustering method (\textbf{ASAP-MT-LR})~\citep{barzilai2015convex}. We also try another approach where we run our MTL/FSL methods on top of the (\textbf{ASAP-Clus-MTL/FSL}) clusters (as their entire formulation is only applicable to convex models).

{\noindent \bf Hyper-Parameter Tuning}\ \ 
In all experiments, we set both $p_1$ and $p_2$ parameters in (\ref{eqn:A}) to $0.5$.
This strikes a balance between obtaining enough observed entries in $\Y$, and ensuring that most of the retained similarities are consistent with the cluster membership.
For MTL settings, we tune parameters like the window size and hidden layer size of CNN, learning rate and the initialization of embeddings (random or pre-trained) based on average accuracy on the union of all tasks' dev sets, in order to find the best identical setting for all tasks. Finally we have the CNN with window size of 5 and 200 hidden units. The learning rate is selected as 0.001; and all MTL models use random initialized word embeddings on sentiment classification and use Glove embeddings as initialization on intent classification, which is likely because the training sets of the intent tasks are usually small. We also used the early stopping criterion based on the previous condition.

For the FSL setting, hyper-parameter selection is difficult since there is no validation data (which is a necessary condition to qualify as a $k$-shot learning). So, in this case we preselect a subset of training tasks as validation tasks and tune the learning rate and training epochs (for the rest we follow the best setting from the MTL experiments) on the validation tasks. During the testing phase (i.e. model training on the target FSL tasks), we fix the selected hyper-parameter values for all the algorithms.

{\noindent \bf Out-of-Vocabulary in Transfer-Performance Evaluation}\ \ \ 
In text classification tasks, transferring an encoder with fine-tuned word embeddings from one task to another may not work as there can be a significant difference between the vocabularies. Hence, while learning the single-task models (line 1 of Algorithm \ref{algo:task-clustering}) we always use the CNNs with fixed set of pre-trained embeddings.

\subsection{Sentiment Classification on Amazon Product Reviews}
\label{ssec:exp_sentiment}
{\noindent \bf Improving Observed Tasks (MTL Setting)}\ \ \ 
\begin{table}[ht]
\caption{ Accuracy on the 12 target sentiment classification tasks.}
\label{tab:sa_exp}
\begin{subtable}{0.45\textwidth}
\small
\centering
\caption{\label{tab:sa_mtl}MTL setting (i.e. training on all 69 tasks).}
\begin{tabular}{l|c}
\hline 
\bf Model  & \bf Avg Acc \\ \hline
Single-task CNN   & 85.51  \\ 
ASAP-MTLR~\tiny{\citep{barzilai2015convex}} & 85.17 \\
Holistic MTL-CNN  & 85.23  \\
Holistic MTL-CNN (target only)  & 85.71 \\
\hline
\hline
\multicolumn{2}{l}{\bf \robusttc-MTL} \\
\hline
\quad clus=5 & 86.13 \\
\quad clus=10 & \bf 86.73 \\
\hline
\multicolumn{2}{l}{\bf ASAP-Clus-MTL}\\
\hline
\quad clus=5 & 86.07 \\
\quad clus=10 & 85.60 \\
\hline
\end{tabular}

\end{subtable}%
~~~~~~~~~~
\begin{subtable}{0.45\textwidth}
\small
\centering
\caption{\label{tab:sa_fsl}Few-shot learning setting (five-shot).}
\begin{tabular}{l|c}
\hline 
\bf Model  & \bf Avg Acc \\ \hline
Single-task CNN w/ pre-trained emb   & 65.92  \\ 
Single-task FastText w/ pre-trained emb   & 63.05 \\ 
Fine-tuned the holistic MTL-CNN   & 76.56 \\
Matching Network~\citep{vinyals2016matching} & 65.73  \\
\hline
\hline
\multicolumn{2}{l}{\bf \robusttc-FSL} \\
\hline
\quad clus=5 & \bf  83.12 \\
\quad clus=10 &  81.96 \\
\quad clus=num\_of\_tasks (no clustering) & 78.85\\
\hline
\multicolumn{2}{l}{\bf ASAP-Clus-FSL} \\
\hline
\quad clus=5 & 82.65 \\
\quad clus=10 & 81.44 \\
\hline
\end{tabular}
\label{tab:table1_b}
\end{subtable}%
\end{table}
Table \ref{tab:sa_exp} shows the results of the 12 target tasks when all 69 tasks are used for training. Since most of the tasks have a significant amount of training data, the single-task baselines achieve good results. Because the conflicts among some tasks (e.g. the 2-star bucket tasks and 5-star bucket tasks require opposite labels on 4-star examples), the holistic MTL-CNN does not show accuracy improvements compared to the single-task methods. It also lags behind the holistic MTL-CNN model trained only on 12 target domains, which indicates that the holistic MTL-CNN cannot leverage large number of background tasks. Our \robusttc-MTL method based on task clustering achieves a significant improvement over all the baselines.

The ASAP-MTLR (best score achieved with five clusters) could improve single-task linear models with similar merit of our method. However, it is restricted by the representative strength of linear models so the overall result is lower than the deep learning baselines. 

{\noindent \bf Adaptation to New Tasks (FSL Setting)}\ \ \ 
Table \ref{tab:sa_exp}(b) shows the results on the 12 five-shot tasks by leveraging the learned knowledge from the 57 previously observed tasks. Due to the limited training resources, all the baselines perform poorly. 
Our \robusttc-FSL gives far better results compared to all baselines ($>$6\%). It is also significantly better than applying Eq. (\ref{eqn:fsl}) without clustering (78.85\%), i.e. using single-task model from each task instead of cluster-models for $P(y|x;\cdot)$. 

{\noindent \bf Comparison to the ASAP Clusters}\ \ 
Our clustering-based MTL and FSL approaches also work for the ASAP clusters, in which we replace our task clusters with the task clusters generated by ASAP-MTLR. In this setting we get a slightly lower performance compared to the \robusttc-based ones on both MTL and FSL settings, but overall it performs better than the baseline models.
This result shows that, apart from the ability to handle varying number of class labels, our \robusttc\ model can also generate better clusters for MTL/FSL of deep networks, even under the setting where all tasks have the same number of labels.

It is worth to note that from Table~\ref{tab:sa_exp}(a), training CNNs on the ASAP clusters gives better results compared to training logistic regression models on the same 5 clusters (86.07 vs. 85.17), despite that the clusters are not optimized for CNNs. Such result further emphasizes the importance of task clustering for deep models, when better performance could be achieved with such models. 

\subsection{User Intent Classification from Diverse Real-World Online Services}
\label{ssec:exp_large}

\begin{table}[ht]
\caption{ Accuracy on the 10 dialog intent classification tasks.}
\label{tab:nlu_exp}
\begin{subtable}{0.45\textwidth}
\small
\centering
\caption{\label{tab:nlu_mtl}MTL setting (i.e. training on all 175 tasks).}
\begin{tabular}{l|c}
\hline 
\bf Model  & \bf Avg Acc \\ \hline
Single-task CNN   & 58.47  \\ 
Holistic MTL-CNN & 62.42  \\
Holistic MTL-CNN (target only) & 62.45 \\
\hline
\hline
\multicolumn{2}{l}{\bf \robusttc-MTL} \\
\hline
\quad clus=10 & 64.41 \\
\quad clus=20 & \bf 68.11 \\
\quad clus=30 & 66.74 \\
\hline
\end{tabular}

\end{subtable}%
~~~~~~~~~~
\begin{subtable}{0.45\textwidth}
\small
\centering
\caption{\label{tab:nlu_fsl}FSL setting (one-shot + 20 examples).}
\begin{tabular}{l|c}
\hline 
\bf Model  & \bf Avg Acc \\ \hline
Single-task CNN w/pre-trained emb   & 34.46 \\ 
Single-task FastText w/pre-trained emb   & 23.87 \\ 
Fine-tuned holistic MTL-CNN  & 30.36 \\
Matching Network~\citep{vinyals2016matching} & 30.42  \\
\hline
\hline
\multicolumn{2}{l}{\bf \robusttc-FSL} \\
\hline
\quad clus=10 & 34.64\\
\quad clus=20 & \bf 37.59\\
\quad clus=30 & 36.82\\
\quad clus=num\_of\_tasks (no clustering) & 34.43 \\
\hline
Adaptive \robusttc-FSL (clus=20) & {\bf 42.97}\\
\hline
\end{tabular}
\end{subtable}%
\end{table}

Table \ref{tab:nlu_exp}(a) \& (b) show the MTL \& FSL results on dialog intent classification, which demonstrates trends similar to the sentiment classification tasks. Note that the holistic MTL methods achieve much better results compared to single-task CNNs. This is because the tasks usually have smaller training and development sets, and both the model parameters learned on training set and the hyper-parameters selected on development set can easily lead to over-fitting. \robusttc-MTL achieves large improvement (5.5\%) over the best MTL baseline, because the tasks here are more diverse than the sentiment classification tasks and task-clustering greatly reduces conflicts from irrelevant tasks.

Although our \robusttc-FSL improves over baselines under the FSL setting, the margin is smaller. This is because of the huge diversity among tasks -- by looking at the training accuracy, we found several tasks failed because none of the clusters could provide a metric that suits the training examples. To deal with this problem, we hope that the algorithm can automatically decide whether the new task belongs to any of the task-clusters. If the task doesn't belong to any of the clusters, it would not benefit from any previous knowledge, so it should fall back to single-task CNN. The new task is treated as ``out-of-cluster'' when none of the clusters could achieve higher than 20\% accuracy (selected on dev tasks) on its training data. We call this method \textbf{Adaptive \robusttc-FSL}, and it gives more than 5\% performance boost over the best \robusttc-FSL result.  

{\noindent \bf Discussion on Clustering-Based FSL}\ \ \ 
The single metric based FSL method (Matching Network) achieved success on homogeneous few-shot tasks like Omniglot and miniImageNet \citep{vinyals2016matching} but performs poorly in both of our experiments. This indicates that it is important to maintain multiple metrics for few-shot learning problems with more diverse tasks, similar to the few-shot NLP problems investigated in this paper.
Our clustering-based FSL approach maintains diverse metrics while keeping the model simple with only $K$ parameters to estimate. 
It is worthwhile to study how and why the NLP problems make few-shot learning more difficult/heterogeneous; and how well our method can generalize to non-NLP problems like miniImageNet. We will leave these topics for future work.

\subsection{Large-Scale User Intent Classification with Task-Pair Sampling}\label{ssec:exp_xl}
Table \ref{tab:nlu1491_exp} shows the MTL results on the extra-large dialog intent classification dataset. Compared to the results on the 175 tasks, the holistic MTL-CNN achieves larger improvement (6\%) over the single-task CNNs, which is a stronger baseline. Similar as the observation on the 175 tasks, here the main reason for its improvement is the consistent development and test performance due to holistic multi-task training approach: both the single-task and holistic multi-task model achieve around 66\% average accuracy on development sets. 
Unlike the experiments in Section~\ref{ssec:exp_large}, we did not evaluate the full transfer-performance matrix $\S$ due to time considerations. Instead, we only use the information of $\sim 4.5\%$ of all the task-pairs, and our algorithm still achieves a significant improvement over the baselines. Note that this result is obtained by only sampling about $n \log^2 n$ task pairs, it not only confirms the empirical advantage of our multi-task learning algorithm, but also verifies the correctness of our theoretical bound in Theorem~\ref{thm:perfect-recovery}.

\begin{table}[ht]
\caption{Accuracy of Multi-Task Learning on the 1,491 dialog intent classification tasks.}
\label{tab:nlu1491_exp}
\centering
\begin{tabular}{l|c}
\hline 
\bf Model  & \bf Avg Acc \\ \hline
Single-task CNN   & 60.49  \\ 
Holistic MTL-CNN & 66.42  \\
\hline
\multicolumn{2}{l}{\bf \robusttc-MTL} \\
\hline
\quad clus=30 & 69.62 \\
\quad clus=40 & \bf 70.50 \\
\quad clus=50 & 69.87 \\
\hline
\end{tabular}
\end{table}

\section{Conclusion}
In this paper, we propose a robust task-clustering method that not only has strong theoretical guarantees but also demonstrates significantly empirical improvements when equipped by our MTL and FSL algorithms. Our empirical studies verify that (i) the proposed task clustering approach is very effective in the many-task learning setting especially when tasks are diverse; (ii) our approach could efficiently handle large number of tasks as suggested by our theory; and (iii) cross-task transfer performance can serve as a powerful task similarity measure. 
Our work opens up many future research directions, such as supporting online many-task learning with incremental computation on task similarities, and combining our clustering approach with the recent learning-to-learn methods (e.g. \citep{ravi2017optimization}), 
to enhance our MTL and FSL methods. 


\bibliography{robusttc}
\bibliographystyle{iclr2018_conference}

\section*{Appendix A: Proof of Low-rankness of Matrix $\X$}
We first prove that the full similarity matrix $\X \in \mathbb{R}^{n\times n}$ is of low-rank. To see this, let  $\A = (\a_1, \ldots, \a_k)$ be the underlying perfect clustering result, where $k$ is the number of clusters and $\a_i \in \{0, 1\}^n$ is the membership vector for the $i$-th cluster. Given $\A$, the similarity matrix $\X$ is computed as
\[
    \X = \sum_{i=1}^k \a_i \a_i^{\top} = \sum_{i=1}^k \B_i
\]
where $\B_i = \a_i \a_i^{\top}$ is a rank one matrix. Using the fact that $\mbox{rank}(\X) \leq \sum_{i=1}^k \mbox{rank}(\B_i)$ and $\mbox{rank}(\B_i) =1$, we have $\mbox{rank}(\X) \leq k$, i.e., the rank of the similarity matrix $\X$ is upper bounded by the number of clusters. Since the number of clusters is usually small,
the similarity matrix $\X$ should be of low rank. 

\section*{Appendix B: Proof of Theorem 4.1}
We then prove our main theorem. First, we define several notations that are used throughout the proof. Let $\X = \U\Sigma \V^{\top}$ be the singular value decomposition of matrix $\X$, where $\U = (\u_1, \ldots, \u_k) \in \R^{n\times k}$ and $\V = (\v_1, \ldots, \v_k) \in \R^{n\times k}$ are the left and right singular vectors of matrix $\X$, respectively. Similar to many related works of matrix completion, we assume that the following two assumptions are satisfied:
\begin{enumerate}
\item {\bf A1}: the row and column spaces of $\X$ have coherence bounded above by a positive number $\mu_0$, i.e., $\sqrt{n/r} \max_{i}\|\P_{\U}(\e_i)\| \leq \mu_0$ and $\sqrt{n/r} \max_{i}\|\P_{\V}(\e_i)\| \leq \mu_0$, where $\P_{\U} = \U\U^{\top}$, $\P_{\V} = \V\V^{\top}$, and $\e_i$ is the standard basis vector, and
\item {\bf A2}: the matrix $\U\V^{\top}$ has a maximum entry bounded by $\mu_1\sqrt{r}/n$ in absolute value for a positive number $\mu_1$.
\end{enumerate}
Let $T$ be the space spanned by the elements of the form $\u_i\y^{\top}$ and $\x\v^{\top}_i$, for $1 \leq i \leq k$, where $\x$ and $\y$ are arbitrary $n$-dimensional vectors. Let $T^{\perp}$ be the orthogonal complement to the space $T$, and let $\P_T$ be the orthogonal projection onto the subspace $T$ given by
\[
    \P_T(\Z) = \P_{\U}\Z + \Z\P_{\V} - \P_{\U}\Z\P_{\V}.
\]

The following proposition shows that for any matrix $\Z \in T$, it is a zero matrix if enough amount of its entries are zero.
\begin{prop}
Let $\Omega$ be a set of $m$ entries sampled uniformly at random from $[1,\ldots, n]\times[1,\ldots, n]$, and $\P_{\Omega}(\Z)$ projects matrix $\Z$ onto the subset $\Omega$. If $m > m_0$, where $m_0 = C_R^2\mu_0rn\beta\log n$ with $\beta > 1$ and $C_R$ being a positive constant, then for any $\Z \in T$ with $\P_{\Omega}(\Z) = 0$, we have $\Z = 0$ with probability $1 - 3n^{-\beta}$.
\end{prop}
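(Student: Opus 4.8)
The plan is to reduce the claim to a near-isometry property of the sampling operator $\P_\Omega$ restricted to the subspace $T$, and then to establish that property via an operator-valued concentration inequality, following the standard matrix-completion analysis (cf.\ \citep{candes2010power} and references therein). \textbf{Reduction:} it suffices to show that, with probability at least $1-3n^{-\beta}$,
\[
\Big\| \P_T - \tfrac{n^2}{m}\,\P_T\P_\Omega\P_T \Big\| \;\le\; \tfrac12 ,
\]
where $\|\cdot\|$ here denotes the operator norm on linear maps of $n\times n$ matrices equipped with the Frobenius inner product. Indeed, if $\Z\in T$ satisfies $\P_\Omega(\Z)=0$, then $\P_T\Z=\Z$ and $\P_T\P_\Omega\P_T\Z=\P_T\P_\Omega\Z=0$, hence $\Z=\big(\P_T-\tfrac{n^2}{m}\P_T\P_\Omega\P_T\big)\Z$, which forces $\|\Z\|_F\le\tfrac12\|\Z\|_F$ and therefore $\Z=0$.

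To prove the operator bound I would first work in the Bernoulli model, in which each index $(a,b)\in[n]\times[n]$ belongs to $\Omega$ independently with probability $p=m/n^2$; a standard coupling argument transfers the conclusion to the uniform-$m$ model at the cost of absolute constants, and I would only sketch this transfer. In the Bernoulli model, write $\P_T\P_\Omega\P_T=\sum_{a,b}\delta_{ab}\,\mathcal{R}_{ab}$, where the $\delta_{ab}\sim\mathrm{Bernoulli}(p)$ are independent and $\mathcal{R}_{ab}(\Z)=\langle\P_T(\e_a\e_b^\top),\Z\rangle\,\P_T(\e_a\e_b^\top)$ is a fixed rank-one, self-adjoint, positive-semidefinite operator. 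Since $\sum_{a,b}\mathcal{R}_{ab}=\P_T$, we have $\mathbb{E}[\P_T\P_\Omega\P_T]=p\,\P_T$, so the task is to bound the deviation of a sum of independent, bounded, positive-semidefinite random operators from its mean.

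The per-term size comes from the coherence assumption {\bf A1}: using the identity $\P_T(\Z)=\P_\U\Z+\Z\P_\V-\P_\U\Z\P_\V$ one computes $\|\P_T(\e_a\e_b^\top)\|_F^2=\|\P_\U\e_a\|^2+\|\P_\V\e_b\|^2-\|\P_\U\e_a\|^2\|\P_\V\e_b\|^2\le 2\mu_0 r/n$, so $\|\mathcal{R}_{ab}\|\le 2\mu_0 r/n$ and the variance parameter $\big\|\sum_{a,b}p\,\mathcal{R}_{ab}^2\big\|\le 2p\mu_0 r/n$. Plugging these into a noncommutative Bernstein inequality gives $\|\P_T\P_\Omega\P_T-p\,\P_T\|\le p/2$ with probability at least $1-3n^{-\beta}$, provided $m=pn^2\ge C_R^2\mu_0 rn\beta\log n$ for a suitable absolute constant $C_R$; dividing by $p$ yields the hypothesis of the reduction and completes the proof. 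I expect the main obstacle to be exactly this last step — carrying the constants through the operator Bernstein estimate so that the sample-size threshold comes out as $m_0=C_R^2\mu_0 rn\beta\log n$ and the failure probability as precisely $3n^{-\beta}$; the reduction, the rank-one decomposition, and the Bernoulli-to-uniform transfer are all routine bookkeeping.
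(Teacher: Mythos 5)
Your argument is correct and follows essentially the same route as the paper: both proofs reduce the claim to the near-isometry of $\frac{n^2}{m}\P_T\P_{\Omega}\P_T$ on the subspace $T$ (which immediately forces any $\Z\in T$ with $\P_{\Omega}(\Z)=0$ to vanish). The only difference is that the paper invokes this near-isometry as a black box (Theorem 3.2 of \cite{candes2010power}) whereas you re-derive it via the coherence bound $\|\P_T(\e_a\e_b^{\top})\|_F^2\le 2\mu_0 r/n$ and an operator Bernstein inequality in the Bernoulli model --- a correct, self-contained (and arguably cleaner) rendering of the same underlying lemma.
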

\begin{proof}
According to the Theorem 3.2 in \cite{candes2010power}, for any $\Z \in T$, with a probability at least $1 - 2n^{2 - 2\beta}$, we have
\begin{eqnarray}\label{eqn:4}
 \|\P_T(\Z)\|_F - \delta \|\Z\|_F \leq \frac{n^2}{m}\|\P_T\P_{\Omega}\P_T(\Z)\|_F^2 = 0
\end{eqnarray}
where $\delta = m_0/m < 1$. Since $\Z \in T$, we have $P_T(\Z) = \Z$. Then from (\ref{eqn:4}), we have $\|\Z\|_F \leq 0 $ and thus $\Z = 0$.
\end{proof}

In the following, we will develop a theorem for the dual certificate that guarantees the unique optimal solution to the following optimization problem
\begin{eqnarray}\label{eqn:pro_appendix}
&\min\limits_{\X,\ \E} & \|\X\|_* + \lambda \|\E\|_1\\ \label{eqn:B}
& \mbox{s.t.}& \P_{\Omega}(\X+\E)  =  \P_{\Omega}(\Y). \nonumber
\end{eqnarray}
\begin{thm}\label{thm:1}
Suppose we observe $m_1$ entries of $\X$ with locations sampled uniformly at random, denoted by $\Omega$. We further assume that $m_2$ entries randomly sampled from $m_1$ observed entries are corrupted, denoted by $\Delta$. Suppose that $\P_{\Omega}(\Y) = \P_{\Omega}(\X + \E)$ and the number of observed correct entries $m_1 - m_2 > m_0=C_R^2\mu_0rn\beta\log n$. Then, for any $\beta > 1$, with a probability at least $1 - 3n^{-\beta}$, the underlying true matrices $(\X, \E)$ is the unique optimizer of (\ref{eqn:pro_appendix}) if both assumptions {\bf A1} and {\bf A2} are satisfied and there exists a dual $\Q \in \R^{n\times n}$ such that (a) $\Q = \P_{\Omega}(\Q)$, (b) $\P_T(\Q) = \U\V^{\top}$, (c) $\|\P_{T^{\top}}(\Q)\| < 1$, (d) $\P_{\Delta}(\Q) = \lambda\ \sgn(\E)$, and (e) $\|\P_{\Delta^c}(\Q)\|_{\infty} < \lambda$.
\end{thm}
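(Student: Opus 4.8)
The plan is to run the standard ``exact dual certificate implies unique recovery'' argument familiar from the rank--sparsity incoherence and robust matrix completion literature (\citep{chandrasekaran2011rank,candes2010power}), carried out with respect to the two relevant subspaces: the tangent space $T$ for the low-rank part and the corruption support $\Delta$ for the sparse part. Since the statement of Theorem~\ref{thm:1} already postulates the existence of a $\Q$ satisfying properties (a)--(e), there is no need to construct a certificate here (that would be the job of a separate golfing/least-squares lemma); the entire task is to convert properties (a)--(e) into a \emph{strict} decrease of the objective of (\ref{eqn:pro_appendix}) under any nonzero feasible perturbation, and then to promote ``the perturbation leaves $T$- and $\Delta^c$-components untouched'' into ``the perturbation is zero'' using Proposition~1.

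First I would fix the perturbation. Let $(\X + H,\ \E + F)$ be any feasible point of (\ref{eqn:pro_appendix}); feasibility forces $\P_{\Omega}(H+F)=0$. Using the subdifferential of the nuclear norm at $\X=\U\Sigma\V^{\top}$, for any $W\in T^{\perp}$ with $\|W\|\le 1$ we have $\|\X+H\|_*\ge \|\X\|_*+\langle \U\V^{\top},H\rangle+\langle W,H\rangle$; I would choose $W$ to be the ``sign matrix'' of $\P_{T^{\perp}}(H)$, whose singular vectors lie in $T^{\perp}$, so that $\langle W,H\rangle=\|\P_{T^{\perp}}(H)\|_*$. Symmetrically, using the subdifferential of $\|\cdot\|_1$ at $\E$ (which is supported on $\Delta$), for any $G$ supported on $\Delta^c$ with $\|G\|_{\infty}\le 1$ we have $\|\E+F\|_1\ge \|\E\|_1+\langle \sgn(\E),F\rangle+\langle G,F\rangle$; I would take $G=\sgn(\P_{\Delta^c}(F))$ so that $\langle G,F\rangle=\|\P_{\Delta^c}(F)\|_1$.

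Next I would use the certificate to cancel the first-order cross terms. By (a), $\Q=\P_{\Omega}(\Q)$, hence $\langle \Q,H+F\rangle=\langle \Q,\P_{\Omega}(H+F)\rangle=0$. Splitting $\langle\Q,H\rangle=\langle\P_T(\Q),H\rangle+\langle\P_{T^{\perp}}(\Q),\P_{T^{\perp}}(H)\rangle$ and using (b) to replace $\P_T(\Q)$ by $\U\V^{\top}$, and splitting $\langle\Q,F\rangle=\langle\P_{\Delta}(\Q),F\rangle+\langle\P_{\Delta^c}(\Q),\P_{\Delta^c}(F)\rangle$ and using (d) to replace $\P_{\Delta}(\Q)$ by $\lambda\,\sgn(\E)$, the identity $\langle\Q,H+F\rangle=0$ rearranges to $\langle\U\V^{\top},H\rangle+\lambda\langle\sgn(\E),F\rangle=-\langle\P_{T^{\perp}}(\Q),\P_{T^{\perp}}(H)\rangle-\langle\P_{\Delta^c}(\Q),\P_{\Delta^c}(F)\rangle$. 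Substituting this into the two subgradient bounds above and applying the matrix Hölder inequality $|\langle A,B\rangle|\le\|A\|\,\|B\|_*$ together with the entrywise bound $|\langle A,B\rangle|\le\|A\|_{\infty}\|B\|_1$, and then invoking (c) and (e), gives
\begin{align*}
\|\X+H\|_*+\lambda\|\E+F\|_1-\|\X\|_*-\lambda\|\E\|_1 \ &\ge\ \bigl(1-\|\P_{T^{\perp}}(\Q)\|\bigr)\,\|\P_{T^{\perp}}(H)\|_* \\
&\quad +\bigl(\lambda-\|\P_{\Delta^c}(\Q)\|_{\infty}\bigr)\,\|\P_{\Delta^c}(F)\|_1\ \ge\ 0,
\end{align*}
with both coefficients strictly positive. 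If $(\X+H,\E+F)$ is optimal the left-hand side is $\le 0$, so it must be $0$, forcing $\P_{T^{\perp}}(H)=0$ (i.e.\ $H\in T$) and $\P_{\Delta^c}(F)=0$ (i.e.\ $\supp(F)\subseteq\Delta$).

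Finally I would eliminate the surviving components. Restricting $\P_{\Omega}(H+F)=0$ to $\Omega\setminus\Delta$, where $F$ vanishes, yields $\P_{\Omega\setminus\Delta}(H)=0$; since $H\in T$ and $|\Omega\setminus\Delta|=m_1-m_2>m_0=C_R^2\mu_0 r n\beta\log n$, Proposition~1 gives $H=0$ with probability at least $1-3n^{-\beta}$. Then $\P_{\Omega}(F)=-\P_{\Omega}(H)=0$, and since $\supp(F)\subseteq\Delta\subseteq\Omega$ this forces $F=0$; hence $(\X,\E)$ is the unique optimizer. The step that needs the most care --- and the main obstacle --- is the application of Proposition~1: its hypothesis requires the zero-pattern set to be a \emph{uniformly} random sample of a given size, so I would argue that, conditioned on its cardinality $m_1-m_2$, the set $\Omega\setminus\Delta$ is uniformly distributed over subsets of that size (because $\Delta$ is sampled uniformly from within the uniformly sampled $\Omega$), which is precisely what permits Proposition~1 to apply with $m=m_1-m_2$ and completes the argument.
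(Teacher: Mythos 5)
Your proposal is correct and follows essentially the same route as the paper's proof: a subgradient/dual-certificate argument showing that conditions (a)--(e) force any feasible perturbation to satisfy $\P_{T^{\perp}}(H)=0$ and $\P_{\Delta^c}(F)=0$, followed by an application of Proposition~1 on $\Omega\setminus\Delta$ (of size $m_1-m_2>m_0$) to kill the remaining components. Your write-up is in fact somewhat more careful than the paper's (explicitly cancelling the cross terms via $\langle \Q, H+F\rangle=0$ and flagging the uniformity hypothesis needed for Proposition~1, where the paper has a few typos such as $\Omega^c\cap\Omega$ in place of $\Omega\setminus\Delta$), but the underlying argument is the same.
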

\begin{proof}
First, the existence of $\Q$ satisfying the conditions (a) to (e) ensures that $(\X, \E)$ is an optimal solution. We only need to show its uniqueness and we prove it by contradiction. Assume there exists another optimal solution $(\X+\N_\X, \E+\N_\E)$, where $\P_{\Omega}(\N_\X + \N_\E) = 0$. Then we have
\begin{eqnarray*}
\|\X+\N_\X\|_* + \lambda \|\E+\N_\E\|_1 & \geq & \|\X\|_* +\lambda \|\E\|_1 +  \langle \Q_\E, \N_\E \rangle + \langle \Q_\X, \N_\X \rangle
\end{eqnarray*}
where $\Q_\E$ and $\Q_\X$ satisfying $\P_{\Delta}(\Q_\E) = \lambda\ \sgn(\E)$, $\|\P_{\Delta^c}(\Q_\E)\|_{\infty} \leq \lambda$, $\P_T(\Q_\X) = \U\V^{\top}$ and $\|\P_{T^{\perp}}(\Q_\X)\| \leq 1$. As a result, we have
\begin{eqnarray*}
& & \lambda \|\E+\N_\E\|_1 + \|\X+\N_\X\|_* \\
& \geq & \lambda \|\E\|_1 + \|\X\|_* + \langle \Q + \P_{\Delta^c}(\Q_\E) - \P_{\Delta^c}(\Q), \N_\E \rangle + \langle \Q + \P_{T^{\perp}}(\Q_\X) - \P_{T^{\perp}}(\Q), \N_\X \rangle \\
& = & \lambda \|\E\|_1 + \|\X\|_*  + \langle \Q, \N_\E + \N_\X \rangle + \langle \P_{\Delta^c}(\Q_\E) - \P_{\Delta^c}(\Q), \N_\E \rangle + \langle \P_{T^{\perp}}(\Q_\X) - \P_{T^{\perp}}(\Q), \N_\X \rangle \\
& = & \lambda \|\E\|_1 + \|\X\|_*  + \langle \P_{\Delta^c}(\Q_\E) - \P_{\Delta^c}(\Q), \P_{\Delta^c}(\N_\E) \rangle + \langle \P_{T^{\perp}}(\Q_\X) - \P_{T^{\perp}}(\Q), \P_{T^{\perp}}(\N_\X) \rangle
\end{eqnarray*}

We then choose $\P_{\Delta^c}(\Q_\E)$ and $\P_{T^{\perp}}(\Q_\X)$ to be such that $\langle \P_{\Delta^c}(\Q_\E), \P_{\Delta^c}(\N_\E) \rangle = \lambda \|\P_{\Delta^c}(\N_\E)\|_1$ and $\langle \P_{T^{\perp}}(\Q_\X), \P_{T^{\perp}}(\N_\X) \rangle = \|\P_{T^{\perp}}(\N_\X)\|_{*}$. We thus have
\begin{eqnarray*}
& & \lambda \|\E+\N_\E\|_1 + \|\X+\N_\X\|_*\\
& \geq &\lambda \|\E\|_1 + \|\X\|_*  + (\lambda - \|\P_{\Delta^c}(\Q)\|_{\infty}) \|\P_{\Delta^c}(\N_\E)\|_1 + (1 - \|\P_{T^{\perp}}(\Q)\|)\|\P_{T^{\perp}}(\N_\X)\|_{*}
\end{eqnarray*}
Since $(\X+\N_\X, \E+\N_\E)$ is also an optimal solution, we have $\|\P_{\Omega^c}(\N_E)\|_1 = \|\P_{T^{\perp}}(\N_\X)\|_{*}$, leading to $\P_{\Omega^c}(\N_\E) = \P_{T^{\perp}}(\N_\X) = 0$, or $\N_\X \in T$. Since $\P_{\Omega}(\N_\X + \N_\E) = 0$, we have $\N_\X = \N_\E + \Z$, where $P_{\Omega}(\Z) = 0$ and $\P_{\Omega^c}(\N_\E) = 0$. Hence, $\P_{\Omega^c \cap \Omega}(\N_\X) = 0$, where $|\Omega^c \cap \Omega| = m_1 - m_2$. Since $m_1 - m_2 > m_0$, according to Proposition 1, we have, with a probability $1 - 3n^{-\beta}$, $\N_\X = 0$. Besides, since $\P_{\Omega}(\N_\X+\N_\E) = \P_{\Omega}(\N_\E) = 0$ and $\Delta \subset \Omega$, we have $\P_{\Delta}(\N_\E) = 0$. Since $\N_\E = \P_{\Delta}(\N_\E) + \P_{\Delta^c}(\N_\E)$, we have $\N_\E = 0$, which leads to the contradiction.
\end{proof}

Given Theorem~\ref{thm:1}, we are now ready to prove Theorem 3.1.
\begin{proof}
The key to the proof is to construct the matrix $\Q$ that satisfies the conditions (a)-(e) specified in Theorem~\ref{thm:1}. First, according to Theorem~\ref{thm:1}, when $m_1 - m_2 > m_0=C_R^2\mu_0rn\beta\log n$, with a probability at least $1 - 3n^{-\beta}$, mapping $\P_T\P_{\Omega}\P_T(\Z): T \mapsto T$ is an one to one mapping and therefore its inverse mapping, denoted by $(\P_T\P_{\Omega}\P_T)^{-1}$ is well defined. Similar to the proof of Theorem 2 in~\cite{chandrasekaran2011rank}, we construct the dual certificate $\Q$ as follows
\[
    \Q = \lambda\ \sgn(\E) + \epsilon_{\Delta} + \P_{\Delta}\P_T(\P_T\P_{\Omega}\P_T)^{-1}(\U\V^{\top} + \epsilon_T)
\]
where $\epsilon_T \in T$ and $\epsilon_{\Delta} = \P_{\Delta}(\epsilon_{\Delta})$. We further define
\begin{eqnarray*}
\H & = & \P_{\Omega}\P_T(\P_T\P_{\Omega}\P_T)^{-1}(\U\V^{\top}) \\
\F & = & \P_{\Omega}\P_T(\P_T\P_{\Omega}\P_T)^{-1}(\epsilon_{T})
\end{eqnarray*}
Evidently, we have $\P_{\Omega}(\Q) = \Q$ since $\Delta \subset \Omega$, and therefore the condition (a) is satisfied. To satisfy the conditions (b)-(e), we need
\begin{eqnarray}
\P_T(\Q) = \U\V^{\top} & \rightarrow & \epsilon_T = -\P_T(\lambda\ \sgn(\E) + \epsilon_{\Delta}) \label{eqn:c1}\\
\|\P_{T^{\perp}}(\Q)\| < 1 & \rightarrow & \mu(\E)\left(\lambda + \|\epsilon_{\Delta}\|_{\infty}\right) + \|\P_{T^{\perp}}(\H)\| + \|\P_{T^{\perp}}(\F)\| < 1\label{eqn:c2} \\
\P_{\Delta}(\Q) = \lambda\ \sgn(\E) & \rightarrow & \epsilon_{\Delta} = - \P_{\Delta}(\H + \F) \label{eqn:c3} \\
|\P_{\Delta^c}(\Q)|_{\infty} < \lambda & \rightarrow & \xi(\X)(1 + \|\epsilon_{T}\|) < \lambda \label{eqn:c4}
\end{eqnarray}
Below, we will first show that there exist solutions $\epsilon_T \in T$ and $\epsilon_{\Delta}$ that satisfy conditions (\ref{eqn:c1}) and (\ref{eqn:c3}). We will then bound $\|\epsilon_{\Omega}\|_{\infty}$, $\|\epsilon_T\|$, $\|\P_{T^{\perp}}(\H)\|$, and $\|\P_{T^{\perp}}(\F)\|$ to show that with sufficiently small $\mu(\E)$ and $\xi(\X)$, and appropriately chosen $\lambda$, conditions (\ref{eqn:c2}) and (\ref{eqn:c4}) can be satisfied as well.

First, we show the existence of $\epsilon_{\Delta}$ and $\epsilon_T$ that obey the relationships in (\ref{eqn:c1}) and (\ref{eqn:c3}). It is equivalent to show that there exists $\epsilon_T$ that satisfies the following relation
\[
\epsilon_T = -\P_T(\lambda\ \sgn(\E)) + \P_T\P_{\Delta}(\H) + \P_T\P_{\Delta}\P_T(\P_T\P_{\Omega}\P_T)^{-1}(\epsilon_T)
\]
or
\[
\P_T\P_{\Omega\setminus\Delta}\P_T(\P_T\P_{\Omega}\P_T)^{-1}(\epsilon_T) = -\P_T(\lambda\ \sgn(\E)) + \P_T\P_{\Delta}(\H),
\]
where $\Omega\setminus\Delta$ indicates the complement set of set $\Delta$ in $\Omega$ and $|\Omega\setminus\Delta|$ denotes its cardinality. Similar to the previous argument, when $|\Omega\setminus\Delta| = m_1 - m_2 > m_0$, with a probability $1 - 3n^{-\beta}$, $\P_T\P_{\Omega\setminus\Delta}\P_T(\Z): T \mapsto T$ is an one to one mapping, and therefore $(\P_T\P_{\Omega\setminus\Delta}\P_T(\Z))^{-1}$ is well defined. Using this result, we have the following solution to the above equation
\begin{eqnarray*}
\epsilon_T & = & \P_T\P_{\Omega}\P_T(\P_T\P_{\Omega\setminus\Delta}\P_T)^{-1}\left(-\P_T(\lambda\ \sgn(\E)) + \P_T\P_{\Delta}(\H) \right)
\end{eqnarray*}

We now bound $\|\epsilon_T\|$ and $\|\epsilon_{\Delta}\|_{\infty}$. Since $\|\epsilon_T\| \leq \|\epsilon_T\|_F$, we bound $\|\epsilon_T\|_F$ instead. First, according to Corollary 3.5 in~\cite{candes2010power}, when $\beta = 4$, with a probability $1 - n^{-3}$, for any $\Z \in T$, we have
\[
    \left\|\P_{T^{\perp}}\P_{\Omega}\P_T(\P_T\P_{\Omega}\P_T)^{-1}(\Z)\right\|_F \leq \|\Z\|_F.
\]
Using this result, we have
\begin{eqnarray*}
    \|\epsilon_{\Delta}\|_{\infty} & \leq & \xi(\X)\left(\|\H\| + \|\F\|\right) \\
    & \leq & \xi(\X)\left(1 + \|\P_{T^{\perp}}(\H)\|_F + \|\epsilon_T\| + \|\P_{T^{\perp}}(\F)\|_F\right) \\
    & \leq & \xi(\X)\left(2 + \|\epsilon_T\| + \|\epsilon_T\|_F\right) \\
    & \leq & \xi(\X)\left[2 + (2k+1)\|\epsilon_T\|\right]
\end{eqnarray*}
In the last step, we use the fact that $\rank(\epsilon_T) \leq 2k$ if $\epsilon_T \in T$. We then proceed to bound $\|\epsilon_T\|$ as follows
\begin{eqnarray*}
\|\epsilon_T\| & \leq & \mu(\E)\left(\lambda + \|\epsilon_{\Delta}\|_{\infty}\right)
\end{eqnarray*}
Combining the above two inequalities together, we have
\begin{eqnarray*}
\|\epsilon_T\| & \leq & \xi(\X)\mu(\E)(2k + 1)\|\epsilon_T\| + 2\xi(\X)\mu(\E) + \lambda \mu(\E) \\
\|\epsilon_{\Delta}\|_{\infty} & \leq & \xi(\X)\left[2 + (2k+1)\mu(\E)(\lambda + \|\epsilon_{\Delta}\|_{\infty}\right),
\end{eqnarray*}
which lead to
\begin{eqnarray*}
\|\epsilon_T\| & \leq & \frac{\lambda \mu(\E) + 2\xi(\X)\mu(\E)}{1 - (2k+1)\xi(\X)\mu(\E)} \\
\|\epsilon_{\Delta}\|_{\infty} & \leq & \frac{2\xi(\X) + (2k+1)\lambda\xi(\X)\mu(\E)}{1 - (2k+1)\xi(\X)\mu(\E)}
\end{eqnarray*}
Using the bound for $\|\epsilon_{\Delta}\|_{\infty}$ and $\|\epsilon_T\|$, we now check the condition (\ref{eqn:c2})
\begin{eqnarray*}
1 & > & \mu(\E)\left(\lambda + |\epsilon_{\Delta}|_{\infty}\right) + \frac{1}{2} + \frac{k}{2}\|\epsilon_T\|
\end{eqnarray*}
or
\[
\lambda < \frac{1 - \xi(\X)\mu(\E)(4k + 5)}{\mu(\E)(k+2)}
\]
For the condition (\ref{eqn:c4}), we have
\[
    \lambda > \xi(\X) + \xi(\X)\|\epsilon_T\|
\]
or
\[
    \lambda > \frac{\xi(\X) - (2k - 1)\xi^2(\X)\mu(\E)}{1 - 2(k+1)\xi(\X)\mu(\E)}
\]
To ensure that there exists $\lambda \geq 0$ satisfies the above two conditions, we have
\[
1 - 5(k+1)\xi(\X)\mu(\E) + (10k^2+21k+8)[\xi(X)\mu(\E)]^2 > 0
\]
and
\[
1 - \xi(\X)\mu(\E)(4k + 5) \geq 0
\]
Since the first condition is guaranteed to be satisfied for $k \geq 1$, we have
\[
    \xi(\X)\mu(\E) \leq \frac{1}{4k + 5}.
\]
Thus we finish the proof.
\end{proof}

\section*{Appendix C: Data Statistics}
We listed the detailed domains of the sentiment analysis tasks in Table \ref{tab:sa_data}. We removed the \emph{musical\_instruments} and \emph{tools\_hardware} domains from the original data because they have too few labeled examples. The statistics for the 10 target tasks of intent classification in Table \ref{tab:nlu_data}

\begin{table*}[th]
\centering
\small
\caption{Statistics of the Multi-Domain Sentiment Classification Data.}\label{tab:sa_data}
\vspace{0.1 in}
\begin{tabular}{|c|c|c|c|}
\hline
{\bf Domains} & {\bf \#train} &{\bf \#validation}  & {\bf \#test}  \\ \hline
apparel & 7398 & 926 & 928 \\
automotive & 601 & 69 & 66 \\
baby & 3405 & 437 & 414 \\
beauty & 2305 & 280 & 299 \\
books & 19913 & 2436 & 2489 \\
camera\_photo & 5915 & 744 & 749 \\
cell\_phones\_service & 816 & 109 & 98 \\
computer\_video\_games & 2201 & 274 & 296 \\
dvd & 19961 & 2624 & 2412 \\
electronics & 18431 & 2304 & 2274 \\
gourmet\_food & 1227 & 182 & 166 \\
grocery & 2101 & 268 & 263 \\
health\_personal\_care & 5826 & 687 & 712 \\
jewelry\_watches & 1597 & 188 & 196 \\
kitchen\_housewares & 15888 & 1978 & 1990 \\
magazines & 3341 & 427 & 421 \\
music & 20103 & 2463 & 2510 \\
office\_products & 337 & 54 & 40 \\
outdoor\_living & 1321 & 143 & 135 \\
software & 1934 & 254 & 202 \\
sports\_outdoors & 4582 & 566 & 580 \\
toys\_games & 10634 & 1267 & 1246 \\
video & 19941 & 2519 & 2539 \\
\hline
\end{tabular}
\end{table*}

\begin{table*}[th]
\centering
\small
\caption{Statistics of the User Intent Classification Data.}\label{tab:nlu_data}
\vspace{0.1 in}
\begin{tabular}{|c|c|c|}
\hline
{\bf Dataset ID} & {\bf \#labeled instances} &{\bf \#labels}   \\ \hline
1	&	497	&	11	\\
2	&	3071	&	14	\\
3	&	305	&	21	\\
4	&	122	&	7	\\
5	&	110	&	11	\\
6	&	126	&	12	\\
7	&	218	&	45	\\
8	&	297	&	10	\\
9	&	424	&	4	\\
10	&	110	&	17	\\
\hline
\end{tabular}
\end{table*}


\end{document}